\documentclass[11pt]{article}
\usepackage[T1]{fontenc}
\usepackage{lmodern}
\usepackage[margin=1in]{geometry}
\usepackage{microtype}
\usepackage{amsmath,amssymb,amsthm}
\usepackage{graphicx}
\usepackage{booktabs}
\usepackage{makecell}
\usepackage{array}
\usepackage{enumitem}
\usepackage{float}
\usepackage[numbers,sort&compress]{natbib}
\usepackage{authblk}
\usepackage[hidelinks]{hyperref}
\newtheorem{proposition}{Proposition}

\title{Trust Is Not a Score: Runtime Assurance Contracts for High-Risk AI Agents}
\author[1,2,3]{Serhii Zabolotnii\thanks{ORCID 0000-0003-0242-2234.
Email: \href{mailto:zabolotnii.serhii@csbc.edu.ua}{zabolotnii.serhii@csbc.edu.ua}}}
\affil[1]{Cherkasy State Business College, Cherkasy, Ukraine}
\affil[2]{State Scientific Research Institute of Armament and Military Equipment
Testing and Certification, Cherkasy, Ukraine}
\affil[3]{healthPrecision, New York, NY, USA}
\date{}

\begin{document}
\maketitle

\begin{abstract}
Benchmarks, audits, and agent protocols describe performance, permissions, and
repair, but not how observed evidence should change an agent's authority during
a consequential task. We call this the assurance-transition gap. We propose a
Runtime Assurance Contract (RAC), a policy-level formal schema binding autonomy
boundaries, component eligibility, evidence state, transition policy,
human-review capacity, and non-compensatory gates. Under RAC, soft metrics may
inform routing, whereas a failed or unknown mandatory gate forces retry, switch,
escalation, deferral, or stop; aggregate performance cannot authorize action. We
define the contract, an evidence record, a permission rule, and five invariants,
and illustrate them in clinical, industrial, and judicial failure probes. We
then report a deterministic failure-injection study in agentic coding: 280
constructed cases evaluated by a gate conjunction, a score-only rule, and a
restricted protocol baseline. At the published example weights and threshold,
the score rule admits 80 of 100 block-required injections and all 40
review-required injections. Tuned in hindsight, it matches the conjunction on
this corpus. For positive weights, a positive threshold, binary risk signals,
zero-signal controls, and an injected case firing each signal alone, we show
that exact agreement holds if and only if the threshold does not exceed the
smallest weight. A separate set of 18 hand-authored traces checks version-pinned
evidence and review transitions against simpler policy variants. In a further
prospective synthetic holdout of 24 episodes, two blinded LLM judges assign
identical labels to all 72 action attempts; RAC and a separately implemented full stateful baseline
both match these labels. These studies test mechanisms on synthetic cases;
they establish neither deployed safety nor cross-domain effectiveness.
\end{abstract}

\noindent\textbf{Keywords:} runtime assurance, dependable AI agents, human oversight, fail-closed gates, escalation policy, high-risk AI, auditability, authority boundaries

\section{Introduction}

Agentic AI systems reach actions through chains of tool calls, component
invocations, and state transitions. A plausible final output can conceal an
unauthorized component, stale context, or a skipped review earlier in the
chain. Here, a \emph{high-risk} workflow is one in which such a path can
materially affect health, legal rights, physical safety, critical operations,
or access to essential services.

Two lines of work address complementary parts of the problem: evaluation-centered
governance, which makes claims measurable and reviewable~\cite{weninger2026ethics},
and protocol-centered engineering, which gives agent--software interactions
explicit permissions, task state, and repair
paths~\cite{sharanarthi2026protocols}. Yet a governance proxy does not establish
its underlying deployment goal~\cite{sharma2026proxy}, and a well-formed tool
call does not establish that the agent should retain its current authority.

The unresolved issue is the mapping from an evidence state to an authority
transition, which we call the \emph{assurance-transition gap}. The earlier TRACE
framework~\cite{zabolotnii2026trace} treats trust in a human--task--agent
interaction as measured, calibrated, and uncertainty-qualified evidence, but
measurement alone neither grants nor withdraws authority. Nor do assurance
arguments, model-level abstention, or runtime monitors automatically compose
component eligibility, evidence continuity, escalation, and available human
review into one task-level decision.

We therefore propose to treat assurance as a non-compensatory runtime contract that
turns measured conditions into bounded authority. For an autonomous system,
this is an adaptation problem in which the property being adapted is the
agent's authority. This paper defines the
Runtime Assurance Contract (RAC), separates soft routing evidence from hard
authorization gates, states five invariants, and evaluates constructed failure
injections and transition scenarios. The contribution is a policy-level formal schema,
not an implemented safety guarantee or a new trust score. Non-compensatory
means that no high score elsewhere can offset a failed mandatory gate.

\section{From Evaluation and Protocols to Authority}

RAC specifies an \emph{authority transition}: the permitted change in what an
agent may do after evidence has been observed. Representative approaches differ
in their usual unit of control (Table~\ref{tab:binding}). Calibration yields
evidence about confidence quality without granting system
permissions~\cite{guo2017calibration}; governance proxies support review while
remaining detachable from deployment outcomes~\cite{sharma2026proxy}; and agent
protocols such as MCP and A2A specify interfaces, permissions, and repair but
leave application-specific authority gates to the deploying
system~\cite{sharanarthi2026protocols,mcp2025spec,a2a2026spec}. Assurance cases
including AMLAS organize evidence into a safety
argument~\cite{hawkins2021amlas,dong2023reliability,adler2022assurance} rather
than acting as runtime dispatchers, and selective prediction offers a
model-level reject option without reallocating task
authority~\cite{geifman2017selective,geifman2019selectivenet}.

Runtime enforcement is the nearest technical predecessor: shields and the
Simplex family bind a safety predicate to a guarded fallback
transition~\cite{alshiekh2018shielding,seto1998simplex,sha2001simplicity,desai2019soter},
a line since extended to probabilistic and online
synthesis~\cite{jansen2018safe,koenighofer2020online} and surveyed for AI
specifically~\cite{koenighofer2022survey}; recent work addresses its adoption
cost by connecting shield synthesis to standard learning
frameworks~\cite{pranger2026easytouse}. What these enforce is a specification
over actions in a modeled environment. RAC treats component eligibility,
evidence continuity, repair, escalation, and bounded human review as
first-class policy objects rather than a single plant safety predicate. Tiered
Agentic Oversight is a neighboring empirical example of task-sensitive
routing~\cite{kim2025tao}. As regulatory motivation, Article~72 of the EU AI Act
makes post-market monitoring a continuing obligation~\cite{euaiact2024}.

RAC can also be read as a self-adaptation policy. In the MAPE-K reference
model, an autonomic manager monitors a managed system, analyzes the
observations, plans an adaptation, and executes it over shared
knowledge~\cite{kephart2003vision,weyns2020intro}. In these terms the agent is
the managed system, the evidence record belongs to the knowledge, the gates are
analysis predicates, and $\Pi$ is the planner. The adapted property is the
agent's own authority, and the adaptation is non-compensatory. Dynamic
assurance cases already couple self-adaptation with assurance evidence that is
updated at runtime~\cite{calinescu2018entrust}; RAC places such evidence on the
authorization path of each consequential action rather than in an argument
about the adaptive system as a whole. Human participation in adaptation has
been modeled as a tactic with its own latency and
reliability~\cite{camara2015human}, and latency-aware planning treats the time
a tactic takes to have effect as a decision input~\cite{moreno2018latency}.
\textsf{G\_SUPERVISION} makes the corresponding assumption explicit: human
review is an admissible transition only while a qualified reviewer can complete
it within the declared deadline.

\begin{table}[H]
\caption{Typical scope of representative assurance approaches. The entries are
qualified comparisons, not claims that every implementation has the same
boundary. AMLAS denotes Assurance of Machine Learning in Autonomous Systems;
RTA, runtime assurance; MAPE-K, monitor--analyze--plan--execute over shared
knowledge; and RAC, the proposed Runtime Assurance Contract.}
\label{tab:binding}
\footnotesize
\setlength{\tabcolsep}{3.5pt}
\begin{tabular}{@{}p{3.4cm}ccccc@{}}
\toprule
Approach & \makecell[c]{Measures\\outcomes} & \makecell[c]{Defines\\permissions} & \makecell[c]{Defines\\repair} & \makecell[c]{Binds human\\capacity} & \makecell[c]{Evidence $\to$\\authority} \\
\midrule
Model evaluation / calibration & yes & no & no & no & no \\
Governance proxy / audit & yes & indirect & optional & optional & indirect \\
Agent protocol interface & not inherent & yes & yes & optional & configured \\
Assurance case / AMLAS & argument & constraints & mitigation & optional & argument-level \\
Selective prediction & model risk & no & reject & optional & model-level \\
Runtime enforcement / RTA & safety state & yes & fallback & typically no & guarded transition \\
Self-adaptive loop (MAPE-K) & monitored state & indirect & adaptation & optional & adaptation plan \\
\textbf{Proposed RAC} & mixed evidence & yes & yes & explicit & composed transitions \\
\bottomrule
\end{tabular}
\end{table}

Two lines of work on agent safety are nearer still, and RAC is best positioned
relative to them. The first places a learned component in the authorization path.
GuardAgent protects a target agent by analyzing a safety request and emitting
guardrail code for execution~\cite{xiang2024guardagent}, and step-level
guardrail models judge a tool invocation before it executes, reasoning over
interaction history to produce a safety verdict~\cite{mou2026toolsafe}.
A benchmark of such guards across multi-step trajectories finds that their
efficacy depends more on the structure of what they are shown than on the guard
model~\cite{chen2026tracesafe}, which is consistent with the position taken
here: a judgment produced by a learned component is evidence, and
Equation~\ref{eq:permit} treats it as a soft metric rather than as a permission
(I1, I3). The distinction is not that such guards are weak --- several are
strong --- but that a system in which they can authorize has no fail-closed
element left when they are wrong.

The second line enforces permissions deterministically at the tool-call
interface. Progent represents privilege as symbolic rules over tool names and
arguments and checks every call against them by a deterministic
procedure~\cite{shi2025progent}; MiniScope derives permission hierarchies
automatically so that least privilege does not depend on hand-written
policy~\cite{zhu2025miniscope}. This is the protocol-only position evaluated in
Section~\ref{sec:eval}, and it is a genuine guarantee at the boundary it
governs. Its limit is scope rather than rigor: a call may be individually
permitted while the task evidence supporting it has gone stale, lost provenance,
or outrun the review capacity the policy assumes. Two recent results sharpen
this. An empirical study of permission-boundary inference reports that frontier
models both omit permissions the execution chain needs and grant unused or
sensitive ones, concluding that authorization is not a conservative-versus-permissive
calibration problem~\cite{yan2026leastpriv} --- in other words, authority is not
a scalar dial, the same claim Section~\ref{sec:noncomp} makes analytically.
Separately, least privilege has been argued insufficient for agentic systems
that combine, approve, and amplify permissions across workflows, motivating a
compositional notion of least autonomy~\cite{parisel2026autonomy}. RAC shares
that diagnosis and differs in remedy: it keeps the permission model and adds a
task-level veto with a persisted transition record. Hazard-driven derivation of
enforceable specifications is complementary to both~\cite{doshi2026towards}.

The proposed distinction is compositional. RAC does not replace these
approaches; it defines how their evidence and constraints jointly govern the
next authority state.

\section{The Runtime Assurance Contract}

We define a \emph{Runtime Assurance Contract} as the tuple
\begin{equation}
\mathcal{C}_{\mathrm{RA}}=\langle\mathcal{B},\mathcal{K},\Pi,
\mathcal{H},\mathcal{E},\mathcal{G}\rangle .
\label{eq:rac}
\end{equation}
The tuple is a normative minimum for an implementable policy, not a deployed
enforcement engine.

\paragraph{Autonomy boundary $\mathcal{B}$.}
$\mathcal{B}(x,a,t)\in\{\textsf{allow},\textsf{require\_review},
\textsf{deny}\}$ evaluates proposed action $a$ in task context $x$ under active
policy version $t$. These verdicts are boundary decisions, not state-transition
labels.

\paragraph{Component eligibility $\mathcal{K}$.}
$\mathcal{K}(m,q,x,\eta_t,t)\in\{0,1\}$ states whether component $m$ may execute
task $q$ given context $x$, evidence state $\eta_t$, and policy version $t$. It
may depend on capability, calibration, modality, version, and domain
restrictions, and is not a model ranking. Eligibility is re-evaluated before each
\textsf{execute} transition and after a \textsf{retry}, \textsf{switch}, or
policy-version change.

\paragraph{Stateful transition policy $\Pi$.}
\begin{equation}
\Pi(z_t,o_t,\eta_t)\longrightarrow(\delta_t,z_{t+1},\eta_{t+1}),
\quad
\delta_t\in\Delta ,
\label{eq:policy}
\end{equation}
where $z_t$ is the runtime state, including the active context and rule
versions, $o_t$ the observed output, and
$\Delta=\{\textsf{execute},$ $\textsf{verify},$ $\textsf{retry},$
$\textsf{switch},$ $\textsf{escalate},$ $\textsf{defer},$ $\textsf{act},$
$\textsf{stop}\}$.
Here \textsf{execute} invokes an eligible component; \textsf{verify} updates
evidence; \textsf{retry} repeats a step; \textsf{switch} selects another
component; \textsf{escalate} requests human authority; \textsf{defer} holds a
safe state; \textsf{act} releases the action; and \textsf{stop} terminates the
action path.

\paragraph{Human authority and capacity $\mathcal{H}$.}
The contract names an accountable role, safe-fallback owner, credentials, veto
rights, review outcomes, response deadline, and capacity variables such as queue
length and reviewer load. An assigned role without timely capacity does not
satisfy this requirement.

\paragraph{Evidence schema $\mathcal{E}$.}
Every transition in $\Delta$ emits
\begin{equation}
\begin{split}
e_t=\langle &\textit{id},\textit{time},\textit{source},z_t,z_{t+1},\delta_t,
\textit{context\_version},\textit{rule\_version},\\
&\textit{component\_version},\mathbf{g}_t,\beta_t,
\textit{input\_hash},\textit{output\_ref},\textit{reason},\\
&\textit{review\_required},\textit{reviewer},\textit{review\_outcome}\rangle .
\end{split}
\label{eq:evidence}
\end{equation}
Here $\mathbf{g}_t$ records gate outcomes and $\beta_t$ the remaining retry
budget. A required review may initially have no assigned reviewer; that null
value records an unavailable or pending assignment, not approval.
\textit{review\_outcome} is pending, approved, rejected, timed out, or returned
for change. An approval is bound to the context, rule, component, input, and
output versions in the reviewed evidence record. It is invalidated when any of
those change before \textsf{act}.

\paragraph{Hard-gate set $\mathcal{G}$.}
The applicable subset $\mathcal{G}_t\subseteq\mathcal{G}$ contains mandatory
functions $g:(z_t,o_t,\eta_t)\mapsto\{0,1,\bot\}$, where $\bot$ denotes an
unknown or unevaluated outcome. Soft metrics cannot compensate for $0$ or $\bot$
(Section~\ref{sec:noncomp}).

\begin{figure}[!htb]
  \centering
  \includegraphics[width=0.88\textwidth]{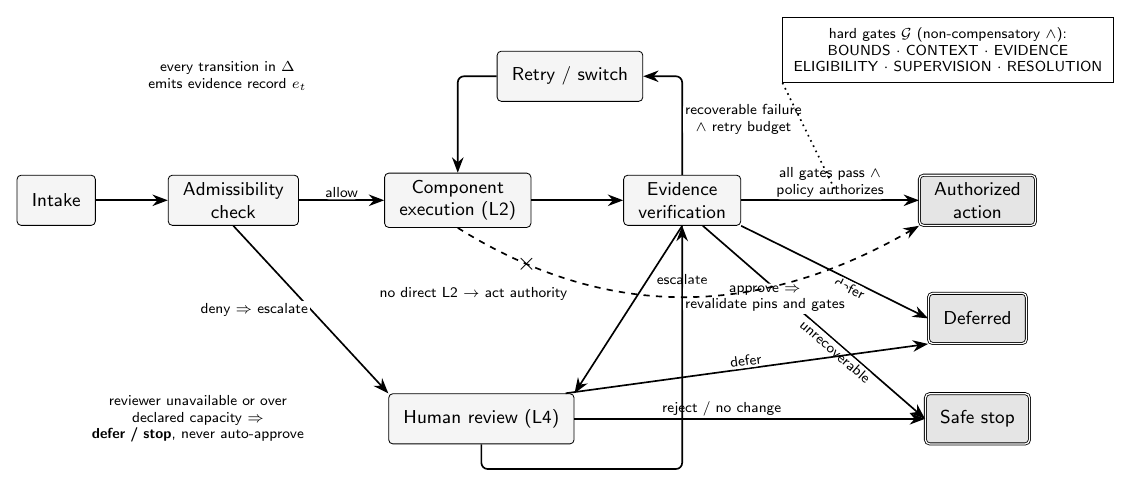}
    \caption{RAC state machine. An output does not authorize action directly.
  Failed or unknown mandatory gates fail closed; unavailable mandatory review
  yields \textsf{defer} or \textsf{stop}. Each consequential transition emits
  $e_t$ (Eq.~\ref{eq:evidence}); Table~\ref{tab:gates} specifies failure
  transitions.}
  \label{fig:statemachine}
\end{figure}

TRACE supplies one architectural mapping, with $\mathcal{B}$, $\mathcal{K}$,
$\Pi$, and $\mathcal{H}$ corresponding to its L1 rule core, L2 component
inventory, L3 orchestration, and L4 supervision, and $\mathcal{E}$ spanning all
layers. RAC itself is architecture-neutral.

Consider an agent preparing a medication order. A stale medication list fails
\textsf{G\_CONTEXT}, so $\Pi$ emits \textsf{retry}; after refresh,
\textsf{G\_SUPERVISION} requires a credentialed clinician within the declared
queue and response limits, so the next transition is \textsf{escalate}, not
\textsf{act}. A protocol could admit every tool call here: protocols govern
interaction boundaries, whereas RAC composes conditions across the task.

\section{Non-Compensatory Authority Transitions}
\label{sec:noncomp}

Let $(\delta_t,z_{t+1},\eta_{t+1})=\Pi(z_t,o_t,\eta_t)$. Permission to act is
\begin{equation}
\operatorname{Permit}(z_t,o_t,\eta_t)
=
\bigl[\delta_t=\textsf{act}\bigr]
\land
\Bigl(\bigwedge_{g\in\mathcal{G}_t}
\bigl[g(z_t,o_t,\eta_t)=1\bigr]\Bigr).
\label{eq:permit}
\end{equation}
The two terms address different failures: $\Pi$ selects a transition, while the
gate conjunction independently vetoes \textsf{act}. It equals $1$ only when every
applicable gate returns $1$; any $0$ or $\bot$ makes $\operatorname{Permit}$
false. A gate omitted as non-applicable evaluates to true only when the policy
version stores its scope and rationale.

Equation~\ref{eq:permit} differs from score-only authorization,
\begin{equation}
\operatorname{Permit}_{\mathrm{score}}(z_t,o_t)
=
\mathbf{1}\!\left\{\textstyle\sum_iw_i m_i(z_t,o_t)<\tau\right\},
\quad w_i>0,\quad\tau>0.
\label{eq:score}
\end{equation}
Here $m_i$ measures a risk signal, and the action is refused once the weighted
sum reaches $\tau$. Soft
metrics remain useful for monitoring and routing, but cannot waive a mandatory
gate.

The compensatory--non-compensatory distinction is not new to this setting.
Decision research has long separated additive models, in which a high value on
one attribute offsets a low value on another, from conjunctive and lexicographic
rules that screen each attribute
independently~\cite{einhorn1970noncomp,tversky1972eba}, and the conditions under
which decision makers move between the two families are well
studied~\cite{payne1993adaptive,hauser2014consideration}.
Equation~\ref{eq:score} is the additive member of that pair and
Equation~\ref{eq:permit} the conjunctive one. What is specific to runtime
assurance is not the distinction but its consequence. In choice modeling,
compensation is a claim about preference; here the attributes are safety
conditions, so compensation is the mechanism by which a satisfied condition
licenses a violated one.

Equation~\ref{eq:score} is not a straw man. The introductory worked example in
one evaluator vendor's documentation quarantines a message when a weighted sum
of three probabilities reaches $0.6$, so a credential request judged $0.85$
probable scores $0.38$ alone and escapes both quarantine and the review band
beneath it~\cite{typesafe2026docs}. The same vendor's guardrails cookbook, by
contrast, thresholds each hazard separately and promotes a borderline
probability about a serious hazard to a block through an independent severity
gate~\cite{typesafe2026guardrails}. Practitioners therefore do draw the
distinction, but they do not apply it consistently, and the introductory
pattern is the additive one.

\begin{table}[tb]
\caption{Minimal hard gates and required failure transitions.}
\label{tab:gates}
\footnotesize
\setlength{\tabcolsep}{3pt}
\centering
\begin{tabular}{@{}p{2.5cm}p{5.1cm}p{4.3cm}@{}}
\toprule
Gate & Passing condition & Failure transition \\
\midrule
\textsf{G\_BOUNDS} & boundary allows action or required review is approved & \textsf{escalate} / \textsf{defer} / \textsf{stop} \\
\textsf{G\_CONTEXT} & required context fresh and traceable & \textsf{retry} / \textsf{defer} / \textsf{stop} \\
\textsf{G\_EVIDENCE} & event fields complete and source-valid & \textsf{verify} / \textsf{retry} / \textsf{stop} \\
\textsf{G\_ELIGIBILITY} & $\mathcal{K}=1$ for task/version/domain & \textsf{switch} / \textsf{stop} \\
\textsf{G\_}\allowbreak\textsf{SUPERVISION} & role, load, queue, deadline admissible & \textsf{defer} / \textsf{stop} \\
\textsf{G\_RESOLUTION} & no unresolved failure on action path & \textsf{retry} / \textsf{escalate} / \textsf{stop} \\
\bottomrule
\end{tabular}
\end{table}

After a reviewer approves, \textsf{act} still requires a fresh evaluation of
Equation~\ref{eq:permit} against the pinned evidence and current policy. A
reviewer leaving the queue after a completed approval does not annul that
approval; capacity is checked when review is requested, not used as an
indefinite condition on an already completed decision.

The retry budget $\beta_t$ is scoped to the candidate action path and stored in
$z_t$: a recoverable failure with $\beta_t>0$ permits \textsf{retry} or
\textsf{switch} and decrements it, and at zero only \textsf{escalate},
\textsf{defer}, or \textsf{stop} remain. Escalation does not clear an unresolved
failure; a qualified reviewer must resolve it or assume authority under a
recorded policy, and the event records the reviewer, outcome, rule version, and
reason. The same conditions select among the alternatives in
Table~\ref{tab:gates}.

RAC implies five invariants:
\begin{description}[nosep,leftmargin=1.1em,style=unboxed]
\item[I1 --- Authority separation.] A learned component cannot authorize a
consequential action or alter its autonomy boundary.
\item[I2 --- Fail-closed gates.] A failed or unknown mandatory gate blocks
\textsf{act}.
\item[I3 --- Confidence asymmetry.] Low confidence may reduce autonomy; high
confidence does not waive independent checks~\cite{kim2025tao}.
\item[I4 --- Oversight viability.] Required review that misses its role,
capacity, or deadline condition yields \textsf{defer} or \textsf{stop}, never
auto-approval.
\item[I5 --- Evidence continuity.] A missing transition record is an
operational failure rather than a later documentation defect.
\end{description}

\section{Cross-Domain Failure Probes}
\label{sec:probes}

Table~\ref{tab:probes} instantiates the same symbolic sequence in three domains:
observed state, failed gate, prohibited action, mandatory transition, and event
record. The probes are illustrative. They provide no performance measurements
or cross-domain validation.

\begin{table}[H]
\caption{Illustrative failure probes and mandatory transitions.}
\label{tab:probes}
\footnotesize
\setlength{\tabcolsep}{3pt}
\begin{tabular}{@{}p{2.05cm}p{2.05cm}p{1.55cm}p{2.1cm}p{2.0cm}p{2.7cm}@{}}
\toprule
Probe & Baseline state & Failed gate & Prohibited action & Required transition & Persisted evidence \\
\midrule
Stale clinical context & benchmark passed; call well-formed &
\makecell[l]{\textsf{G\_}\\\textsf{CONTEXT}} & commit order draft & refresh and \textsf{retry}; else
\textsf{defer} & stale field, gate, context version, $\delta_t$, reason \\
\addlinespace
Judicial draft with missing provenance & score above threshold &
\makecell[l]{\textsf{G\_}\\\textsf{EVIDENCE}} & release recommendation & \textsf{verify}; if policy marks
high consequence, \textsf{escalate} & missing field, gate, $\delta_t$, reviewer \\
\addlinespace
Operator queue over capacity & call permitted; review required &
\makecell[l]{\textsf{G\_}\\\textsf{SUPER}\\\textsf{VISION}} & approve setpoint change & safe hold with
\textsf{defer}; else \textsf{stop} & load, deadline, gate, $\delta_t$, reason \\
\bottomrule
\end{tabular}
\end{table}

Across the three probes, a score or protocol check can remain permissive while
task evidence requires less authority. RAC's claimed distinction is precisely
this additional veto and its persisted transition record
(Eq.~\ref{eq:evidence}), not measured cross-domain superiority. An audit may
record a failure, a monitor may flag it, and a protocol may still admit the
call; RAC requires the task-level veto and its resulting transition record.

\section{A Documented Failure}
\label{sec:contrast}

An open-source agent harness persists model-authored skills and subagent
specifications across episodes and applies refinements at turn boundaries
without review. Its authors report that the agent found an out-of-band interface bypassing the rules of its
task environment, used it despite an explicit instruction not to, and kept the
bypass as reusable state that later refinement optimized further. They conclude
that safe deployment needs least-privilege action interfaces, independent state
validation and auditable rollback~\cite{karten2026primeagent}. This is a
reported incident, not a controlled evaluation of RAC. It motivates a check of
whether later authority can change without a recorded, independently validated
transition; it does not establish that the particular RAC gates proposed here
would have prevented the incident.

\section{Evaluation}
\label{sec:eval}

The incident in Section~\ref{sec:contrast} is observational and cannot separate
the decision rule from co-varying design choices. A paired failure-injection
study isolates the authorization rule on constructed cases. We report one,
narrowed to the single domain for which we have a
reference implementation of Equation~\ref{eq:permit} --- agentic coding, where
the consequential action is a file write proposed by a coding agent.

\subsection{Design}

Every case is evaluated by all three arms, so all comparisons are paired.

\begin{itemize}[nosep,leftmargin=1.2em]
\item \textbf{RAC.} The non-compensatory conjunction of Equation~\ref{eq:permit};
\textsf{act} only when every applicable gate returns $1$.
\item \textbf{Score-only.} Equation~\ref{eq:score} over the same signals:
\textsf{act} iff $\sum_i w_i m_i < \tau$.
\item \textbf{Protocol-only.} Interface permissions and repair: the
\emph{declared} path string is matched against the scope and deny globs and the
diff is checked for well-formedness, with no path normalization, no symlink
resolution, and no content inspection.
\end{itemize}

The three soft metrics are saturated --- each takes the value $1$ as soon as its
feature is present --- which is the strongest possible reading of each detector
and a deliberately charitable choice for the baselines.

\subsection{Corpus}

The corpus is 280 cases: 140 injections and 140 matched controls across seven
fault classes. Its provenance is deliberately split. Five classes come from an
existing \textsf{envelope\_bypass} corpus written in August 2026 for an unrelated
purpose and left untouched here. Two classes were added afterwards to include
review-required cases; both were derived from a documented
guard contract rather than tuned against any arm's behavior, and both lie
\emph{inside} the declared write scope, so the boundary gates are silent on them
and only \textsf{G\_SUPERVISION} can reach them. Ground truth is fixed by
construction, not by adjudication, and is checked by contract tests that fail if
any class loses its gate label or if the escalation classes drift outside the
declared scope.

\subsection{Outcomes}

We count unauthorized action over block-required injections, admission despite
any injected gate violation over all injections, and automatic admission of
review-required injections, together with false refusal on controls. The third
quantity does not show whether an arm actually sent a request to a reviewer;
refusal alone also scores zero. The three injection counts overlap by
construction: the total admitted injections equal admitted block-required plus
admitted review-required cases. The 20 cases per class are variants of a few
templates, not independent draws from a population. We therefore report corpus
counts without sampling intervals or significance tests.

\begin{table}[tb]
\caption{Admitted constructed cases by arm (lower is better on injected cases).
The score arm is shown at the published example configuration of
Section~\ref{sec:noncomp}, $w=(0.45,0.30,0.25)$, $\tau=0.6$, and at its best
operating point chosen in hindsight over the whole grid,
$w=(0.25,0.30,0.45)$, $\tau=0.05$.}
\label{tab:outcomes}
\small
\centering
\begin{tabular}{@{}llrr@{}}
\toprule
Outcome & Arm & Count & Denominator \\
\midrule
unauthorized action & RAC & \textbf{0} & 100 \\
 & score (vendor) & 80 & 100 \\
 & score (best) & 0 & 100 \\
 & protocol & 56 & 100 \\
\addlinespace
admitted injections & RAC & \textbf{0} & 140 \\
 & score (vendor) & 120 & 140 \\
 & score (best) & 0 & 140 \\
 & protocol & 96 & 140 \\
\addlinespace
auto-admitted review cases & RAC & \textbf{0} & 40 \\
 & score (vendor) & 40 & 40 \\
 & score (best) & 0 & 40 \\
 & protocol & 40 & 40 \\
\addlinespace
false refusal & all four arms & 0 & 140 \\
\bottomrule
\end{tabular}
\end{table}

\subsection{Configuration dependence}

At its best operating point the weighted sum matches the conjunction exactly, on
every reported action outcome, with no false refusals. The conjunction removes
the weights and aggregate threshold from authorization, though engineers still
have to choose, test, and scope its constituent gates. On the selected grid the
weighted sum has 114 configurations and 30 reproduce the conjunction; the
published example configuration is not among them. This fraction depends on
the grid and is not an estimate of how often deployed score policies fail.

Figure~\ref{fig:fragility} shows the grid result. Sweeping six weight
permutations against nineteen strictly positive thresholds, the clean region
ends at $\tau=0.25$, the smallest weight. Proposition~\ref{prop:tau} explains
that boundary for this binary, fully covered corpus.

\begin{figure}[tb]
\centering
\includegraphics{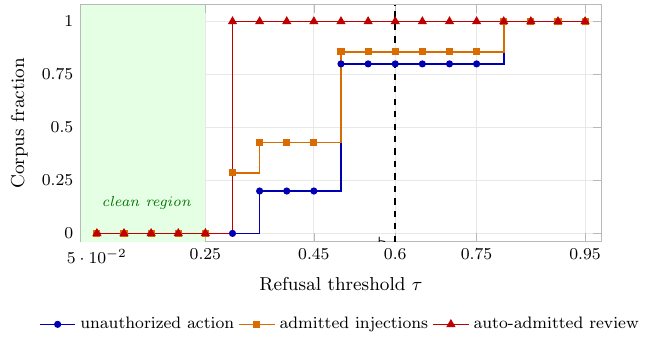}
\caption{Failure rates of the score-only arm against the refusal threshold, at
the most favorable weight permutation for each $\tau$. The clean region ends
exactly at $\tau=\min_i w_i=0.25$ on this positive grid; automatic admission
of review-required cases jumps to $1.0$ immediately
above it. The published vendor threshold lies well outside.}
\label{fig:fragility}
\end{figure}

\begin{proposition}
\label{prop:tau}
Suppose $w_i>0$ and $\tau>0$; the risk signals are saturated,
$m_i\in\{0,1\}$; every control case has
$m_i=0$ for all $i$; every injected case fires at least one metric; and for each
metric $i$ the corpus contains an injected case that fires $i$ alone. Then
Equation~\ref{eq:score} refuses every injected case and no control case if and
only if $0<\tau\le\min_i w_i$.
\end{proposition}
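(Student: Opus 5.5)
The plan is to prove the two directions separately, working directly from Equation~\ref{eq:score}. Write $S(c)=\sum_i w_i m_i(c)$ for case $c$. A case is refused exactly when $S(c)\ge\tau$. So the claim is that, under the stated hypotheses, $S(c)\ge\tau$ for every injected case and $S(c)<\tau$ for every control case if and only if $0<\tau\le\min_i w_i$.

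First, the controls. Every control has $m_i=0$ for all $i$, so $S=0$. Since $\tau>0$ by hypothesis, $0<\tau$ and no control is ever refused. The controls therefore impose no constraint beyond positivity of $\tau$, and the equivalence reduces to: every injected case is refused if and only if $\tau\le\min_i w_i$.

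For sufficiency, take $\tau\le\min_i w_i$ and any injected case $c$. By hypothesis $c$ fires some metric $j$, so $m_j(c)=1$. Because the signals are binary and all weights are positive, every term of $S(c)$ is nonnegative. Hence $S(c)\ge w_j\ge\min_i w_i\ge\tau$, and $c$ is refused.

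For necessity, argue by contrapositive. Suppose $\tau>\min_i w_i$ and let $k$ attain the minimum. By hypothesis the corpus contains an injected case $c_k$ that fires metric $k$ alone. Then $S(c_k)=w_k<\tau$, so $c_k$ is admitted and the rule fails to refuse every injected case. Combining the two directions with the positivity hypothesis gives exactly $0<\tau\le\min_i w_i$.

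There is no real obstacle here. The step that needs care is necessity, where the single-signal coverage hypothesis does all the work: without an injected case firing the minimum-weight metric alone, the threshold could rise above $\min_i w_i$ without admitting any injection. I would add a remark to that effect. I would also note that the boundary case $\tau=\min_i w_i$ is refused because Equation~\ref{eq:score} permits only when the sum is strictly below $\tau$. This matches the clean region ending at $\tau=0.25$ in Figure~\ref{fig:fragility}.
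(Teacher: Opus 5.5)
Your proof is correct and follows the same route as the paper's. Controls score $0$ and are never refused when $\tau>0$. Sufficiency holds because any injected case scores at least the weight of a metric it fires. Necessity comes from the injected case firing the minimum-weight metric alone; the paper states this directly as requiring $\tau\le w_i$ for every $i$ rather than by contrapositive.
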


\begin{proof}
Controls score $0$, so for $\tau>0$ none is refused, and over-blocking is
$0$ throughout. An injected case firing metric $i$ alone scores $w_i$ and is
refused iff $w_i\ge\tau$; since some injected case fires each metric alone,
refusing all injections requires $\tau\le w_i$ for every $i$. Conversely, if
$\tau\le\min_i w_i$ then any case firing at least one metric scores at least
$\tau$ and is refused.
\end{proof}

The corpus satisfies the hypotheses: boundary, secret, and test metrics each
fire alone in some class, and \textsf{test-rename} is the only class firing two
($0.45+0.25=0.70$), which is why it is the sole class the vendor configuration
catches. The interpretation matters more than the arithmetic.
$0<\tau\le\min_i w_i$ is precisely the condition under which a single metric
suffices to refuse --- that is, under which no high metric can offset a low one.
The weighted sum reproduces the conjunction exactly when it has been configured
so that compensation cannot occur on cases covered by the proposition. The
equivalence says nothing about missing detectors, continuous or noisy signals,
or whether the selected hard gates are correct.

\subsection{Per-class behavior}

\begin{table}[tb]
\caption{Injected cases passed by each arm, out of 20 per class. Lower is
better; the required transition is \textsf{block} or human \textsf{review}.}
\label{tab:perclass}
\small
\centering
\begin{tabular}{@{}llcrrrr@{}}
\toprule
Class & Gate & Req. & RAC & score (vend.) & score (best) & protocol \\
\midrule
base64-secret & \textsf{G\_EVIDENCE} & block & 0 & 20 & 0 & 20 \\
deny-relative & \textsf{G\_BOUNDS} & block & 0 & 20 & 0 & 0 \\
path-traversal & \textsf{G\_BOUNDS} & block & 0 & 20 & 0 & 16 \\
symlink-escape & \textsf{G\_BOUNDS} & block & 0 & 20 & 0 & 20 \\
test-rename & \textsf{G\_BOUNDS} & block & 0 & 0 & 0 & 0 \\
test-moved-in-scope & \textsf{G\_SUPERVISION} & review & 0 & 20 & 0 & 20 \\
test-skip-marker & \textsf{G\_SUPERVISION} & review & 0 & 20 & 0 & 20 \\
\bottomrule
\end{tabular}
\end{table}

The restricted protocol-only arm refuses \textsf{deny-relative}
completely and catches most of \textsf{path-traversal}, because there the
declared string itself violates a glob. It passes every symlink escape and every
embedded secret, because the declared string looks legal and it never resolves
the path or reads the content. It admits every review-required case because
this baseline checks only the declared call and diff shape. That result does
not characterize all permission systems or demonstrate an actual RAC review
request.

\subsection{Versioned transition conformance scenarios}

The single-step corpus above does not test the persisted transitions that
distinguish the full contract from a gate conjunction. We therefore add 18
hand-authored, deterministic traces in a dependency-free executable model
supplied with the artifact. Each trace contains state updates, verification,
an optional review request and approval, and a final proposed action. Its
expected transition is declared alongside the trace. These are conformance
checks of the specified policy, not independent task observations.

The full model matches all 18 declared transitions. A stateless conjunction of
the same current gate fields matches 12; a variant that accepts verification
and approval across context or policy changes matches 13; and a variant that
assumes review capacity always exists matches 16. The failures identify what
the stateful terms add: a changed context, rule, or payload calls for renewed
verification; an approval pinned to an older version cannot release a changed
action; unavailable review must defer. A completed approval remains valid if
the reviewer later leaves the queue, provided its pinned state has not
changed. These counts describe chosen scenarios and support no population
claim.

\subsection{Prospective synthetic holdout with blinded LLM labels}
\label{sec:llm-holdout}

We added a protected-branch merge workflow to compare RAC with a full stateful
baseline. Required checks, reviews, stale-approval dismissal, and separation
of reviewer from last pusher follow documented branch-protection
options~\cite{github2026protected}. The study policy additionally specifies
versioned evidence, review expiry, capacity, an action deadline, and a retry
budget; these extensions are experimental choices. Both implementations receive
the same scripted CI results and review responses. The baseline retains all
these capabilities and does not import the RAC implementation.

After eight development episodes, the policy, implementations, and evaluator
were frozen with SHA-256 hashes. A separate LLM session then constructed 24
synthetic episodes containing 72 action attempts without seeing either
implementation or the development cases. Two fresh GPT-6 Sol sessions received
only the policy and anonymized event histories, with no expected labels,
implementation outputs, or other judge's answers. Each labeled every attempt
as allow, review, block, or insufficient information and supplied a rationale.
Both label files were frozen before the test run. This was a local prospective
freeze, not an externally registered protocol; access restrictions were
instructional on a shared filesystem. No test-driven implementation changes
were made.

The judges agreed on all 72 attempts: 30 allow, 30 review, and 12 block, with
no insufficient-information labels. RAC and the full stateful baseline each
matched all 72 labels. Each produced zero policy-violating simulated releases
among the 42 non-allow attempts and zero false refusals among the 30 allow
attempts. Both recorded 28 review requests, 25 completed reviews, five invalid
review events, 30 simulated releases, and two consumed retries. Mean delay for
the 25 completed reviews was 1.84 simulated ticks; unfinished requests do not
enter this mean. Four attempts stopped and none deferred. Thus the holdout
does not test the unavailable-review deferral branch, despite containing
capacity changes.

The result is observed agreement under a shared policy, with no advantage for
RAC over the full stateful baseline. The judges belong to the same model
family, and the corpus is LLM-authored; agreement does not establish an
independent domain oracle. This is a holdout from implementation development,
not from the supplied specification. Its labels concern synthetic merge
authority, not the truth of CI outputs or the safety of actual patches.

\subsection{Instruments, determinism, and reproducibility}

The original probe implementation, escalation corpus, and contract tests were
written with an AI coding assistant (Claude Code, Claude Opus~5) under the
author's direction; the assistant also executed the runs and generated the
original tables and figure from the raw decision log. The transition scenario
model and this revision were prepared with OpenAI Codex. Earlier drafting and
language revision used OpenAI Codex CLI 0.144.5 and PaperMentor 0.1.0. No model
call occurs when replaying the static or transition artifacts. The new holdout
uses LLM calls for corpus construction and labeling, then deterministic replay
against the saved labels. The executable artifact generates the reported
counts; replay does not reproduce the stochastic judge calls. Exact model
snapshots and sampling parameters for the judging sessions were unavailable.
The public code supplement at
\url{https://github.com/SZabolotnii/TRACE-RAC-code-supplement} contains
the standalone implementations and data for the new LLM-labeled holdout.
The private gate implementation and fixture generator used for the original
probe are not distributed; that probe cannot be independently rerun from the
public supplement.

The original probe run is deterministic: no sampling, no seeds beyond corpus construction, and
no model call on the evaluation path. It was reproduced byte-identically in two
independent Python environments. Two harness defects found by the first run are
recorded here because they are characteristic of this design. First, the
symlink-escape fixture stores its path absolutely, which the envelope requires
in order to resolve it on disk; the protocol arm saw the absolute string, failed
the scope glob, and refused --- appearing to catch the escape for a reason that
cannot arise in a real tool call. The arm now receives the path in the form the
agent declares it, after which symlink escapes pass it 20/20, as they should.
Second, the first run measured the score arm only at the vendor point, from which
the weighted sum appeared unconditionally worse; the grid sweep overturned that
reading and produced the actual result. Both were fixed before any number here
was treated as a finding.

\subsection{Threats to validity}

This is a mechanism study, not a safety result, and eight limits bound it.

\begin{description}[nosep,leftmargin=1.2em,style=unboxed]
\item[Circularity.] The RAC arm is an implementation of the gate set this paper
posits. The study shows that the set is implementable and identifies when
competing decision rules on identical inputs reproduce it. It does not show
that the set is correct; no independent oracle defines the right gates.
\item[Authored traces.] The 18 transition traces and expected outcomes were
hand-authored for contract conformance. Their agreement with the executable
model is a consistency check, not a measured reliability rate.
\item[LLM-labeled holdout.] Separate sessions hide implementation outputs from
the judges, but a shared policy and model family can produce correlated
errors. The 24 new episodes do not establish independent human domain
validation. The full stateful baseline implements the same policy and matches
RAC; the holdout provides no evidence of architectural superiority.
\item[Single synthetic domain.] The clinical, industrial, and judicial probes of
Section~\ref{sec:probes} remain illustrative. Nothing here supports the
cross-domain claim.
\item[Noisy signals.] The signals are noiseless and controls are strictly zero
on every metric. Proposition~\ref{prop:tau} does not predict behavior when
detectors miss faults or fire on clean actions.
\item[Over-blocking is a corpus property.] Controls are clean by construction, so
$0.000$ over-blocking is not evidence that fail-closed is cheap. The operational
cost of refusal --- latency, reviewer queueing, denied service --- is not
measured in deployment. The holdout records scripted queue values and simulated
completion delays, which do not estimate these costs.
\item[Repeated templates.] Twenty variants per original fault class reuse a
small number of templates. Their counts are not independent observations, and
the descriptive results have no population-level confidence interval.
\item[Limited rerunnability.] The shared static decision log permits a count
audit, not reconstruction of the private gate and its input fixtures. The
transition model and the new holdout are executable but are separate synthetic
implementations, not a rerun of the original private gate.
\end{description}

Establishing added value in deployment therefore still requires a domain study:
consequential actions in versioned task traces, labeled before injection by two
qualified reviewers under a frozen codebook, with one prespecified fault injected
into paired copies under a recorded seed, balanced within domain and consequence
strata, compared with a stateful baseline that has the same detectors. Its
outcomes should include actual review requests, completed decisions, harmful
releases, false refusals, and service delay. Deployment-specific harm analysis
must set the effect margins before data collection; this paper claims none.

\section{Limits and Ethics}

RAC is a design proposition. It does not prove system safety, validate the
underlying boundary, calibrate model confidence, or remove domain-specific
threshold setting; accountable owners must derive gates from risk analysis and
applicable duties. A fail-closed policy can itself cause harm through delayed
care, denied service, downtime, or unequal access to scarce reviewers, and
evidence logs may expose sensitive data. Implementations therefore need service
limits, safe fallback ownership, least-privilege logging, and retention controls.
\textsf{G\_SUPERVISION} is a minimum viability condition, not a complete account
of meaningful human oversight.

Evaluators that return probabilities without a rationale are hard to justify in
regulated domains when their outputs are summed, because the scalar does not
say which condition drove a refusal. Under Equation~\ref{eq:permit} a refusal is
recorded as the identity of the gate that returned false. This recovers which
condition failed, not why the model judged it so.

\section{Conclusion}

RAC specifies how gate outcomes, versioned evidence, and human review govern
the next authority transition. In the constructed binary corpus, a weighted
risk sum matches the gate conjunction when its positive refusal threshold is
no greater than the smallest positive weight; the published example threshold
falls outside that range. The versioned trace checks expose the additional
state that a stateless conjunction does not retain. On the prospective
LLM-labeled synthetic holdout, a full stateful baseline matches RAC. These
are analytic and synthetic results. Whether the contract improves decisions in deployed systems
with noisy detectors and real review queues remains open.

\section*{Supplementary Material}
The accompanying supplementary archive contains a field-limited decision log,
its count-audit script, the executable transition scenarios, and a copy of the
public holdout repository. The public repository
\url{https://github.com/SZabolotnii/TRACE-RAC-code-supplement} contains the
policy, separately written implementations, development and test episodes,
blinded judge labels with rationales, freeze manifests, and simulated
execution logs for the LLM-labeled holdout. It supports rechecking those
counts and policy-model outputs, subject to the rerunnability limits described
above. It does not include the private gate implementation or fixture
generator used for the original probe.

\bibliographystyle{plainnat}
\bibliography{references}

\end{document}